\documentclass{article}

\PassOptionsToPackage{numbers,compress}{natbib}
\usepackage[sglblindworkshop, final]{neurips_2026}
\usepackage[utf8]{inputenc}
\usepackage[T1]{fontenc}
\usepackage{amsmath,amssymb,amsthm,booktabs,graphicx,microtype}
\usepackage[hidelinks]{hyperref}
\usepackage{subfig}
\usepackage{placeins}
\usepackage{wrapfig}
\usepackage{xcolor}
\definecolor{myblue}{HTML}{3585C9}
\newcommand{\blue}[1]{\textcolor{myblue}{#1}}
\usepackage{url}

\newcommand{\Prob}{\mathbb{P}}
\newcommand{\simplex}{\Delta^{C-1}}
\newcommand{\bbox}{\mathcal{C}^\mathrm{bbox}}
\newtheorem{lemma}{Lemma}
\title{
	When Has a Bayesian Neural Network Sampled Enough?
	Adaptive Inference Time with Statistical Guarantees
}
\workshoptitle{E-Values: From Statistics to ML}
\author{%
	Fabian Denoodt \\
	Eindhoven University of Technology \\
	\And
	Sibylle Hess \\
	Eindhoven University of Technology \\
}

\begin{document}
\maketitle
\begin{abstract}
Bayesian neural network predictions are commonly approximated using a fixed number of Monte Carlo samples per input, without controlling the resulting error that comes from this finite sample. We propose the use of confidence sequences to dynamically determine how many samples are needed while maintaining statistical guarantees. We consider several ways in which predictive probabilities are used, including identifying the most likely class, approximating the full predictive distribution, and resolving probability-threshold decisions. Sampling stops once the corresponding decision can be made with the desired guarantee. Experiments show that the method allocates the computational budget efficiently, assigning more samples to ambiguous inputs than to easy inputs while preserving reliable decisions and reducing overall latency relative to a fixed Monte Carlo budget.
\end{abstract}

\section{Introduction}
Bayesian neural networks (BNNs) provide a principled framework for representing predictive uncertainty, which is particularly valuable in high-stakes applications. They typically approximate the posterior predictive distribution $p(\mathbf{y}^{\star} \mid \mathbf{x}^{\star}, \mathbf{x}_{1:n}, \mathbf{y}_{1:n})$ by averaging predictions over posterior parameter samples $\mathbb{E}_{\boldsymbol{\theta} \sim q_\lambda}[p(\mathbf{y}^{\star} \mid \mathbf{x}^{\star}, \boldsymbol{\theta})]$ \citep{daxberger2021laplace,DBLP:conf/nips/MaddoxIGVW19,DBLP:conf/icml/GalG16,denoodt2025varbasedsmoothing,krause2025probabilisticAiBook}. In practice, this average is usually estimated using the same fixed number of stochastic forward passes for every input. Although simple, this ad hoc choice provides no guarantee that the Monte Carlo estimate is sufficiently accurate for the prediction. It may also waste computation on easy inputs while using too few samples for difficult ones.

We instead ask an input- and task-specific question: 
\begin{center}
\emph{When have enough posterior samples been drawn to make the required decision?} 
\end{center}
The answer depends on how the prediction will be used. If only the most likely label is needed, it suffices to separate the leading class from its competitors. If the full probability vector is required, every coordinate must be estimated accurately enough. Meanwhile, for threshold-based decisions, it is enough to determine whether any class probability exceeds a specified level $q$. Adapting the number of samples to these objectives would allow the method to concentrate computation on the inputs that need it most.

To make these adaptive decisions in a principled manner, we need uncertainty bounds that remain statistically valid as posterior samples accumulate. We therefore use the multivariate confidence sequence (CS) of \citet{denoodt2026tightness} to track the predicted probability vector and stop once the required decision is certified. Unlike repeatedly checking an ordinary confidence interval, CSs remain valid under repeated inspection and data-dependent stopping \citep{howard2021timeuniform}. This also distinguishes our approach from the approach of \citet{bethell2024robust}, which stops when changes in the estimated variance remain below a preset threshold and provides neither statistical guarantees nor an indication of how accurate the final prediction is.

Overall, we \textbf{\blue{(i)}} formulate anytime-valid stopping rules for various prediction use cases
 that control the Monte Carlo estimation error under repeated inspection and adaptive stopping, and \textbf{\blue{(ii)}}  show empirically that adaptive sampling uses fewer posterior draws for easy inputs while preserving reliable decisions, reducing overall inference time.

\section{Anytime-valid adaptive prediction}
\paragraph{Preliminaries.} Fix an input $\mathbf{x}$ and a classification problem with $C$ classes, indexed by $c\in[C]:=\{1,\ldots,C\}$. A BNN represents uncertainty about its parameters through an approximate posterior $q_\lambda(\boldsymbol\theta)$. Drawing $\boldsymbol\theta\sim q_\lambda$ and evaluating the network gives a class-probability vector $f_{\boldsymbol\theta}(\mathbf{x})$, and averaging these vectors gives the posterior predictive class probabilities
\begin{equation}
\boldsymbol\mu
:=
\mathbb{E}_{\boldsymbol\theta\sim q_\lambda}
\left[f_{\boldsymbol\theta}(\mathbf{x})\right]
\in\simplex, \label{eq:mu_predictive_posterior}
\end{equation}
where $\simplex=\{\mathbf m\in[0,1]^C:\sum_{c=1}^C m_c=1\}$ is the probability simplex. Each independent draw $\boldsymbol\theta_i\sim q_\lambda$ leads to a prediction $\mathbf y_i=f_{\boldsymbol\theta_i}(\mathbf{x})$, and after $t$ draws, the usual Monte Carlo estimate of $\boldsymbol\mu$ is $\widehat{\boldsymbol\mu}_t=t^{-1}\sum_{i=1}^t\mathbf y_i$.

A CS for $\boldsymbol \mu$ is a sequence of data-dependent sets $(\mathcal{C}_t)_{t\geq 1}$, where $\mathcal{C}_t=\mathcal{C}_t(\mathbf y_1,\ldots,\mathbf y_t)$ is constructed from the first $t$ predictions, such that
\begin{equation}
\mathbb{P}\!\left( \forall t \ge 1:\ \boldsymbol \mu \in \mathcal{C}_t \right) \ge 1 - \alpha,  \label{eq:cs-guarantee}
\end{equation}
for a prescribed error level $\alpha$~\citep{ryu_gamingbasedconfseq}.  Equation~\eqref{eq:cs-guarantee} immediately implies that, for any 
stopping time $\tau$ determined from the observed samples $\mathbf{y}_1,\dots,\mathbf{y}_\tau$, $\Prob(\boldsymbol\mu\in\mathcal C_\tau)\geq1-\alpha$. 

\paragraph{Stopping rules for various prediction tasks.} Suppose that the CS supplies coordinate-wise bounds $[L_{t,c},U_{t,c}]$ satisfying $\mathcal C_t\subseteq\prod_{c=1}^C[L_{t,c},U_{t,c}]$. We propose three task-specific stopping rules based on these bounds: one for selecting a class, one for approximating the full predictive distribution, and one for making a threshold decision. Proofs of all three guarantees are deferred to Appendix~\ref{app:stopping-proofs}. 

\emph{Class selection.} 
If we want to return a single class while ensuring that it has the largest posterior predictive probability, let $\widehat c_t=\arg\max_{c\in[C]}\widehat\mu_{t,c}$ denote the current Monte Carlo prediction and let $c^\star=\arg\max_{c\in[C]}\mu_c$ denote the leading class of $\boldsymbol\mu$. We stop once the lower bound of $\widehat c_t$ exceeds every competing upper bound:
\begin{equation}
\tau_{\mathrm{cls}}
:=
\min \{
 t: L_{t,\widehat c_t} >
 \max_{c\neq\widehat c_t} U_{t,c}
\}.
\label{eq:class-stopping}
\end{equation}
If the condition is not met after a predefined $T_{\max}$ samples, set $\tau_{\mathrm{cls}} := \infty$ and abstain from prediction.
\begin{lemma}
\label{lem:class-selection}
With probability at least $1-\alpha$, whenever $\tau_{\mathrm{cls}}<\infty$, the returned class is the leading class: $\widehat c_{\tau_{\mathrm{cls}}}=c^\star$.
\end{lemma}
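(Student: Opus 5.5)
We work on the single event
\[
E:=\{\forall t\ge 1:\ \boldsymbol\mu\in\mathcal C_t\},
\]
which by the CS guarantee~\eqref{eq:cs-guarantee} has $\Prob(E)\ge 1-\alpha$. The plan is to show the conclusion holds \emph{deterministically} on $E$. The lemma then follows immediately, without appealing to optional stopping separately. The uniform-in-time event already covers the random time $\tau_{\mathrm{cls}}$, including its dependence on $\widehat c_t$ and on the truncation at $T_{\max}$.

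The first step is to pass from the set to the coordinate bounds. On $E$, for every $t$ we have $\boldsymbol\mu\in\mathcal C_t\subseteq\prod_{c}[L_{t,c},U_{t,c}]$, hence $L_{t,c}\le\mu_c\le U_{t,c}$ for all $c\in[C]$ and all $t\ge1$.

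The second step fixes an outcome in $E$ with $\tau:=\tau_{\mathrm{cls}}<\infty$, so the stopping condition~\eqref{eq:class-stopping} holds at $t=\tau$. Write $\hat c=\widehat c_\tau$. For any $c\ne\hat c$, chain the inequalities
\[
\mu_{\hat c}\ \ge\ L_{\tau,\hat c}\ >\ \max_{c'\ne\hat c}U_{\tau,c'}\ \ge\ U_{\tau,c}\ \ge\ \mu_c .
\]
Thus $\mu_{\hat c}>\mu_c$ strictly for every competitor, so $\hat c$ is the unique maximizer of $\boldsymbol\mu$, i.e.\ $\hat c=c^\star$.

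There is no real obstacle here; the only points needing care are bookkeeping ones. We should note that the strict inequality makes $c^\star$ well defined whenever stopping occurs, even if ties could otherwise make the $\arg\max$ ambiguous; on $E$, stopping with ties among the leaders is impossible. We should also remark that when $\tau_{\mathrm{cls}}=\infty$ (abstention) the statement is vacuous. Finally, because we use the time-uniform event $E$ rather than a fixed-$t$ bound, the data-dependent choice of which coordinate is checked, namely $\widehat c_t$, causes no multiplicity issue.
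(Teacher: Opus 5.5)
Your proposal is correct and matches the paper's proof: you work on the simultaneous coverage event and chain $\mu_{\widehat c}\ge L_{\tau_{\mathrm{cls}},\widehat c}>U_{\tau_{\mathrm{cls}},c}\ge\mu_c$ for every competitor $c\neq\widehat c$. Your extra remarks are accurate points the paper leaves implicit: the strict inequality makes $c^\star$ unique whenever stopping occurs, and the time-uniform event already covers the data-dependent stopping time and the choice of $\widehat c_t$.
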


\emph{Predictive-distribution accuracy.}
To approximate the complete predictive vector, the coordinate-wise error can be controlled instead. One can define
\begin{equation}
 r_t
 =
 \max_c\max\{
 U_{t,c}-\widehat\mu_{t,c},
 \widehat\mu_{t,c}-L_{t,c}
 \}, \label{eq:r_t}
\end{equation}
such that stopping at
\begin{equation}
\tau_{\mathrm{dist}}
:=
\min\{t:r_t\leq\varepsilon\}
\label{eq:dist-stopping}
\end{equation}
Measuring the distance from $\widehat\mu_{t,c}$ to both $U_{t,c}$ and $L_{t,c}$ in Eq.~\eqref{eq:r_t}, rather than using the interval width, also accounts for potentially asymmetric confidence intervals.

\begin{lemma}
\label{lem:distribution-accuracy}
With probability at least $1-\alpha$, whenever $\tau_{\mathrm{dist}}<\infty$, the returned estimate satisfies $\|\widehat{\boldsymbol\mu}_{\tau_{\mathrm{dist}}}-\boldsymbol\mu\|_\infty\leq\varepsilon$.
\end{lemma}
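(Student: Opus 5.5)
The argument works on the single good event supplied by the CS guarantee in Eq.~\eqref{eq:cs-guarantee}. Let $E=\{\forall t\ge 1:\ \boldsymbol\mu\in\mathcal C_t\}$, so that $\Prob(E)\ge 1-\alpha$. The claim will then follow from a deterministic statement: on $E$, for every $t$ (and in particular for $t=\tau_{\mathrm{dist}}$ whenever $\tau_{\mathrm{dist}}<\infty$), the condition $r_t\le\varepsilon$ implies $\|\widehat{\boldsymbol\mu}_t-\boldsymbol\mu\|_\infty\le\varepsilon$. Because this implication holds at every time simultaneously, we never need the optional-stopping form of the guarantee for the random time $\tau_{\mathrm{dist}}$. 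Nor do we need to worry about the abstention convention when $T_{\max}$ is reached, since the statement only concerns the event $\tau_{\mathrm{dist}}<\infty$.

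\emph{Step 1.} Fix an outcome in $E$ and a time $t$. The assumption $\mathcal C_t\subseteq\prod_{c}[L_{t,c},U_{t,c}]$ gives $L_{t,c}\le\mu_c\le U_{t,c}$ for every $c\in[C]$.

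\emph{Step 2.} For each coordinate we split on the sign of $\widehat\mu_{t,c}-\mu_c$. If $\mu_c\ge\widehat\mu_{t,c}$, then
\[
|\widehat\mu_{t,c}-\mu_c| = \mu_c-\widehat\mu_{t,c} \le U_{t,c}-\widehat\mu_{t,c}.
\]
If instead $\mu_c<\widehat\mu_{t,c}$, then
\[
|\widehat\mu_{t,c}-\mu_c| \le \widehat\mu_{t,c}-L_{t,c}.
\]
In both cases $|\widehat\mu_{t,c}-\mu_c|\le\max\{U_{t,c}-\widehat\mu_{t,c},\ \widehat\mu_{t,c}-L_{t,c}\}$. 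Taking the maximum over $c$ and comparing with the definition in Eq.~\eqref{eq:r_t} gives $\|\widehat{\boldsymbol\mu}_t-\boldsymbol\mu\|_\infty\le r_t$. This is exactly where measuring the distance to both endpoints, rather than using the interval width, matters. The argument does not require $\widehat\mu_{t,c}\in[L_{t,c},U_{t,c}]$; if the estimate falls outside the interval, one of the two terms is negative but the bound still holds.

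\emph{Step 3.} Specialize to $t=\tau_{\mathrm{dist}}$ on the event $\{\tau_{\mathrm{dist}}<\infty\}\cap E$. By the definition in Eq.~\eqref{eq:dist-stopping}, $r_{\tau_{\mathrm{dist}}}\le\varepsilon$, so $\|\widehat{\boldsymbol\mu}_{\tau_{\mathrm{dist}}}-\boldsymbol\mu\|_\infty\le\varepsilon$.

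Since $\Prob(E)\ge1-\alpha$, the lemma follows. The only point requiring care is the one addressed at the start: obtaining a guarantee at a data-dependent time. Working on the time-uniform event $E$ removes that difficulty, and everything after it is elementary.
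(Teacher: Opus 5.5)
Your proof is correct and matches the paper's argument: on the simultaneous coverage event you bound $|\widehat\mu_{t,c}-\mu_c|\le\max\{U_{t,c}-\widehat\mu_{t,c},\ \widehat\mu_{t,c}-L_{t,c}\}\le r_t$ at every $t$, then specialize to $t=\tau_{\mathrm{dist}}$. Your case split on the sign of $\widehat\mu_{t,c}-\mu_c$ just spells out the one-line step that the paper leaves implicit.
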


\emph{Threshold decision.}
To determine whether any class has a mean probability of at least $q \in [0, 1]$ , we can stop either when some lower bound reaches $q$, indicating that the threshold is reached, or when all upper bounds fall below $q$,
establishing that it is not:
\begin{equation}
\tau_q
:=
\min\{
 t:
 \max_c L_{t,c}\geq q
 \ \text{or}\ 
 \max_c U_{t,c}<q
\}.
\label{eq:threshold-stopping}
\end{equation}

\begin{lemma}
\label{lem:threshold-decision}
With probability at least $1-\alpha$, whenever $\tau_q<\infty$, the rule correctly determines whether $\max_c\mu_c\geq q$.
\end{lemma}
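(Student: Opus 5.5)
The plan is to work on the single good event $E:=\{\forall t\ge 1:\ \boldsymbol\mu\in\mathcal C_t\}$, which has $\Prob(E)\ge 1-\alpha$ by the CS guarantee~\eqref{eq:cs-guarantee}. On $E$ I will show deterministically that, whenever $\tau_q<\infty$, the output of the rule agrees with the truth of $\max_c\mu_c\ge q$. Since the conclusion then holds for every realization in $E$, it holds with probability at least $1-\alpha$. No optional-stopping argument is needed beyond the uniform-in-time coverage: $\tau_q$ is just some random index $\le T_{\max}$, and on $E$ coverage holds at every index simultaneously.

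First step: on $E$, the inclusion $\mathcal C_t\subseteq\prod_c[L_{t,c},U_{t,c}]$ gives $L_{t,c}\le\mu_c\le U_{t,c}$ for all $t$ and all $c$. Taking maxima over $c$ preserves these inequalities, so $\max_c L_{t,c}\le\max_c\mu_c\le\max_c U_{t,c}$ for every $t$, and in particular at $t=\tau_q$.

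Second step: split according to which clause of~\eqref{eq:threshold-stopping} triggered the stop. The rule declares ``reached'' when $\max_c L_{\tau_q,c}\ge q$; I fix this convention so that ties, where both clauses hold, are resolved consistently. In that case $\max_c\mu_c\ge\max_c L_{\tau_q,c}\ge q$, so the declaration is correct. Otherwise $\max_c U_{\tau_q,c}<q$ must hold, and then $\max_c\mu_c\le\max_c U_{\tau_q,c}<q$, so declaring ``not reached'' is also correct.

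Both clauses cannot hold simultaneously on $E$, since that would give $q\le\max_c\mu_c<q$. So on $E$ the tie convention never matters and the decision is unambiguous. There is no real obstacle here. The only points needing care are stating that the event $E$ is uniform over $t$, so that the data-dependent index $\tau_q$ is covered, and noting that $\tau_q=\infty$ (abstention) makes the claim vacuous.
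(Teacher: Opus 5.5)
Your proposal is correct and takes the same route as the paper: on the simultaneous coverage event, $\max_c L_{\tau_q,c}\ge q$ forces $\max_c\mu_c\ge q$, and $\max_c U_{\tau_q,c}<q$ forces $\max_c\mu_c<q$. Your remarks that the two clauses cannot both hold on that event, and that $\tau_q=\infty$ makes the claim vacuous, are valid refinements the paper leaves implicit.
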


\paragraph{A simple but powerful multivariate CS.}
The stopping rules above can be applied with any multivariate CS that provides coordinate-wise bounds. In our experiments, we use the bounding-box CS of \citet{denoodt2026tightness} because its regions are efficient to compute while producing empirically tight intervals that adapt to the variance of the observations.

At time $t$, its region is
\begin{equation}
\bbox_t
:=
\{
\mathbf m\in[0,1]^C
\;\mid\;
 m_c\in[L_{t,c},U_{t,c}],\quad c=1,\ldots,C
\},
\label{eq:bbox}
\end{equation}
where
\begin{equation}
[L_{t,c},U_{t,c}]
:=
\{
 m_c\in[0,1]
 \;\mid\;
 k_t^c(m_c)
 \leq
 C\,\!\bigl(\alpha^{-1}-\sum_{c'\neq c}w_{c'}k_{\star,t}^{c'}\bigr)
\}.
\label{eq:bbox-coordinate}
\end{equation}
Here, $k_t^c(m_c)$ denotes the scalar hedged-capital function of \citet{waudbysmith2024betting}, evaluated at the candidate mean $m_c$ using observations $y_{1,c},\ldots,y_{t,c}$, and $k_{\star,t}^c:=\min_{u\in[0,1]}k_t^c(u)$. Because $k_t^c$ is quasiconvex, its minimum and the interval endpoints can be computed with the conservative bisection procedure of \citet{denoodt2026tightness}.

We make two additional tightenings that are specific to our setting. First, we intersect the boxes over time, resulting in the endpoints
\begin{equation}
L_{t,c}:=\max_{1\le s\le t}L_{s,c},
\qquad
U_{t,c}:=\min_{1\le s\le t}U_{s,c}.
\end{equation}
Second, since $\boldsymbol\mu\in\simplex$, we can use the probability-simplex constraint to further tighten each coordinate interval:
\begin{equation}
\begin{aligned}
L_{t,c}
&:=\max{L_{t,c},\ 1-\sum_{c'\ne c}U_{t,c'}}, \quad
U_{t,c}
&:=\min{U_{t,c},\ 1-\sum_{c'\ne c}L_{t,c'}}.
\end{aligned}
\label{eq:simplex-hull}
\end{equation}
These updates exploit both past regions and the simplex constraint while preserving coverage guarantees.

\FloatBarrier
\section{Experiments}
We focus the evaluation on the \emph{class selection} stopping rule, which corresponds to reporting the most likely class. We evaluate it in simulation and on CIFAR-100 with $\alpha=.05$. If the rule has not stopped by $T_{\max}$, we return $\widehat c_{T_{\max}}$ as unresolved, without a statistical guarantee.

\paragraph{Simulated three-class problem.}
For each of 500 three-class inputs, we model stochastic predictions as
$\mathbf y_t\mid(\boldsymbol\mu,\kappa)\sim\mathrm{Dirichlet}(\kappa\boldsymbol\mu)$. For 65\% of the inputs, we use an ``easy'' setup with $\boldsymbol\mu=(.75,.25r,.25(1-r))$, where $r\sim\mathrm{Beta}(1,1)$ and $\kappa\sim\mathrm{Unif}[80,300]$. The remaining 35\% simulate ambiguous predictions with $\boldsymbol\mu=(.46,.44,.10)$ and $\kappa\sim\mathrm{Unif}[30,120]$. The true class $c^\star$ is therefore quickly separated for easy inputs, whereas ambiguous inputs may require more samples to detect it. We set $T_{\max}=400$.

\begin{figure}[t]
  \centering
  \includegraphics[width=\linewidth]{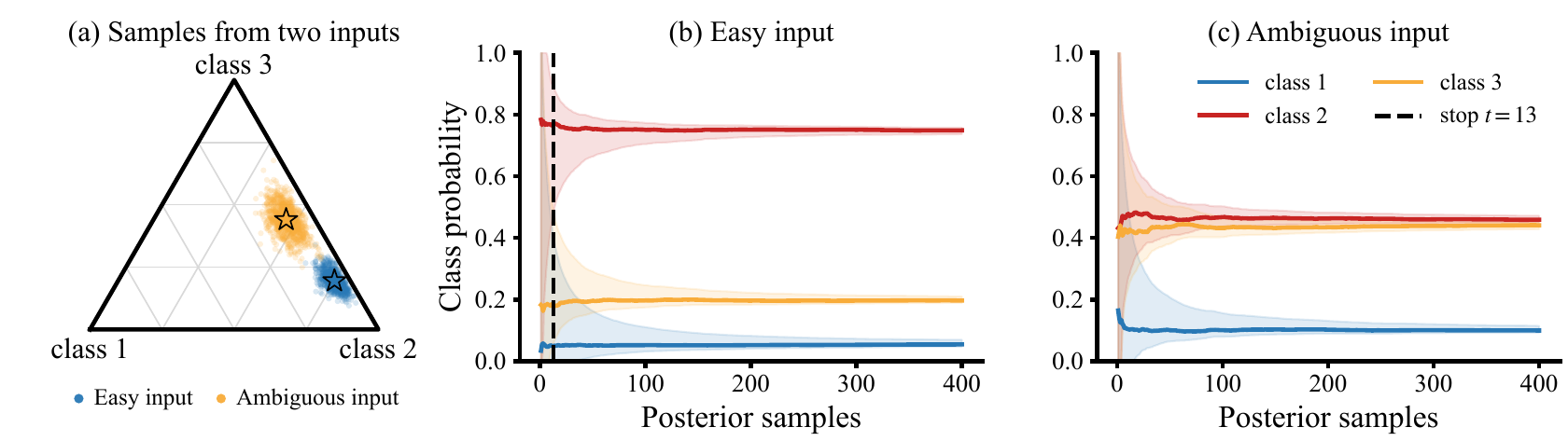}
  \caption{
\textbf{Example of the \emph{class selection} rule.} (a) True means and samples for an easy and an ambiguous input. (b) The CS detects the easy input's leading class at $t=13$. (c) The ambiguous input remains unresolved at $T_{\max}=400$.
}
  \label{fig:toy}
\end{figure}

\begin{wrapfigure}[13]{r}{.4\linewidth}
\vspace{-2em}
  \centering
  \includegraphics[width=\linewidth]{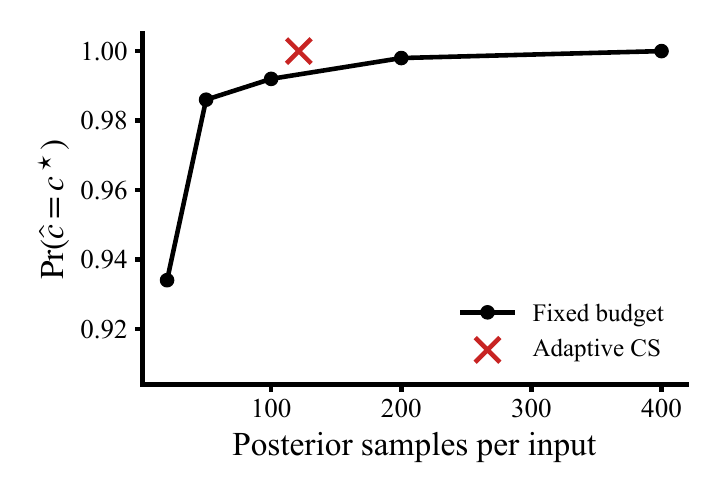}
  \caption{
  \textbf{Sample efficiency on simulated predictions.} Adaptive stopping requires fewer samples per input to match accuracy.
  }
  \label{fig:toy-accuracy}
  \vspace{-1em}
\end{wrapfigure}
Figure~\ref{fig:toy} illustrates the contrast between an easy input and a near tie. Across the simulation, the adaptive rule recovers $c^\star$ on every input, while roughly one fifth reach the cap unresolved. Figure~\ref{fig:toy-accuracy} shows that the adaptive rule spends the compute budget more wisely than a fixed budget per input.

\paragraph{CIFAR-100.} We evaluate the rule on CIFAR-100 \citep{krizhevsky2009learning} using a public ResNet-18 \citep{he2016resnet} checkpoint with 79.26\% test accuracy \citep{resnet18cifar100}. With \texttt{laplace-torch} \citep{daxberger2021laplace}, we fit a diagonal generalized Gauss-Newton Laplace approximation over \emph{all} network parameters:
\begin{equation}
 q(\boldsymbol\theta)=\mathcal N\!\left(\widehat{\boldsymbol\theta},
 \left(\operatorname{diag}(G_{\rm GGN})+\lambda I\right)^{-1}\right).
 \label{eq:laplace}
\end{equation}

We compute curvature from the 50,000 training images and select prior precision $\lambda=3\times10^5$ by predictive NLL on a disjoint 1,000-image calibration set. Each posterior sample requires a full-network forward pass. Since the exact predictive mean $\boldsymbol \mu$ in Eq.~\eqref{eq:mu_predictive_posterior} is unavailable, an independent 1,000-sample Bayesian model average (BMA) defines the reference class $c^{\mathrm{ref}}$. We evaluate 1,000 test images with $T_{\max}=200$.

\WFclear
\begin{wrapfigure}[20]{r}{.4\linewidth}
  \vspace{-1.7em}
  \centering
  \includegraphics[width=\linewidth]{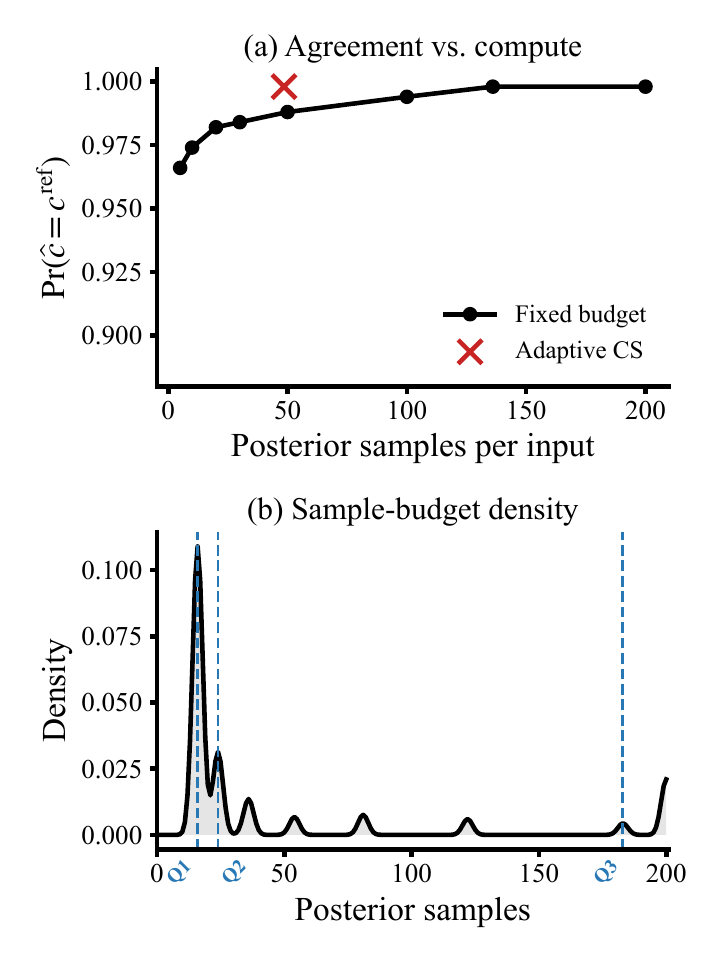}
  \vspace{-2em}
  \caption{\textbf{Sample efficiency on CIFAR-100.} Agreement with the reference BMA (top) and the allocated-sample distribution with Q1--Q3 quartiles (bottom).}
  \label{fig:cifar}
  \vspace{-1em}
\end{wrapfigure}

We inspect the stopping rule only at $t\in\{16,\allowbreak 24,\allowbreak 36,\allowbreak 54,\allowbreak 81,\allowbreak 122,\allowbreak 183,\allowbreak 200\}$, although every draw updates the CS. This reduces the computational overhead of constructing $\bbox$, which is otherwise costly relative to a single forward pass.    


Figure~\ref{fig:cifar} shows 99.8\% agreement with the reference using 48.6 samples on average. The rule stops for 89.9\% of inputs, and every resolved prediction agrees with the BMA. The vertical lines mark the Q1, Q2, and Q3 quartiles at 16, 24, and 183 samples, respectively, indicating that the majority of inputs require only a small number of samples. The smallest fixed budget reaching the same 99.8\% agreement is $T^\star=136$ samples per input.


\FloatBarrier
\paragraph{Does sample efficiency translate into time efficiency?}
We benchmark mean latency of making predictions on an NVIDIA A100, including posterior sampling, forward passes, and CS computation.
Table~\ref{tab:runtime} shows that adaptive inference averages 0.253s per input, including 0.055s of CS overhead. It is nevertheless 3.21$\times$ faster than 200 posterior samples and 2.24$\times$ faster than the performance-matched $T^\star=136$. Fixed 30 is faster but reaches only 98.4\% agreement.

\begin{table}
\centering
\caption{
	Mean latency per CIFAR-100 input, measured over 1,000 inputs. Adaptive sampling uses $\bar T=48.6$ posterior samples on average and achieves the same 99.8\% agreement as the smallest matching fixed Monte Carlo budget, $T^\star=136$, while being $2.24\times$ faster.
	}
\label{tab:runtime}
\small
\begin{tabular}{lccccc}
\toprule
Method & Inference (s) & CS overhead (s) & Total (s) & Adaptive vs. row & $\Pr(\hat c = c^\mathrm{ref})$ \\
\midrule
Fixed 30 & 0.125& 0 & 0.125 & 2.03$\times$ slower & 98.4\% \\
Fixed 200 & 0.813 & 0 & 0.813 & 3.21$\times$ faster & 99.8\% \\
\midrule
Fixed $T^\star=136$ & 0.566 & 0 & 0.566 & 2.24$\times$ faster & 99.8\% \\
Adaptive ($\bar T=48.6$) & 0.198 & 0.055 & \textbf{0.253} & -- & 99.8\% \\
\bottomrule
\end{tabular}
\end{table}

\section{Conclusion}
We introduce an anytime-valid approach for dynamically determining the number of Monte Carlo samples needed for each input in Bayesian neural network inference. Using confidence sequences, sampling stops once the prediction can be made with a statistical guarantee. This saves samples on easy inputs while using more computation for ambiguous ones, reducing overall inference time without sacrificing reliable decisions.

\section*{Acknowledgments and Disclosure of Funding}
We thank Joaquin Vanschoren for their valuable input and discussions. This work was funded by the MOSAIC project under Grant Agreement No. 101194414. 

Funded by the European Union. Views and opinions expressed are however those of the author(s) only and do not necessarily reflect those of the European Union or the European Commission. Neither the European Union nor the European Commission can be held responsible for them.

\bibliographystyle{plainnat}
\bibliography{biblio}

\appendix
\section{Proofs of the stopping-rule guarantees}
\label{app:stopping-proofs}

\paragraph{Proof of Lemma~\ref{lem:class-selection}.}
\begin{proof}
Denote $\widehat c=\widehat c_{\tau_{\mathrm{cls}}}$. Then, on the simultaneous coverage event in Eq.~\eqref{eq:cs-guarantee}, for every $c\neq\widehat c$, the stopping condition gives
$$
\mu_{\widehat c}
	\geq L_{\tau_{\mathrm{cls}},\widehat c}
	> U_{\tau_{\mathrm{cls}},c}
	\geq \mu_c.
    $$
Thus $\widehat c_t$ is the leading class. The coverage event has probability at least $1-\alpha$.
\end{proof}

\paragraph{Proof of Lemma~\ref{lem:distribution-accuracy}.}
\begin{proof}
On the simultaneous coverage event, $\mu_c\in[L_{t,c},U_{t,c}]$ for every $t$ and $c$. Hence $|\widehat\mu_{t,c}-\mu_c|\leq\max\{U_{t,c}-\widehat\mu_{t,c},\widehat\mu_{t,c}-L_{t,c}\}\leq r_t$. At $t=\tau_{\mathrm{dist}}$, this is at most $\varepsilon$ for every coordinate.
\end{proof}

\paragraph{Proof of Lemma~\ref{lem:threshold-decision}.}
\begin{proof}
On the simultaneous coverage event, $\max_cL_{t,c}\geq q$ implies that $\mu_c\geq q$ for at least one class, whereas $\max_cU_{t,c}<q$ implies that $\mu_c<q$ for every class. Thus either stopping condition guarantees the corresponding decision.
\end{proof}

\end{document}